\newtheorem{definition}{{Definition}}
\newtheorem{theorem}{{Theorem}}
\begin{document}
%
\title{Tensor-Ring Nuclear Norm Minimization and Application for Visual Data Completion}

\name{Jinshi Yu$^{1,2}$, Chao Li$^2$, Qibin Zhao$^{1,2\star}$, Guoxu Zhou$^{1\star}$\thanks{$\star$ Corresponding authors. E-mail address: gx.zhou@gdut.edu.cn, qibin.zhao@riken.jp}}
\address{$^1$School of Automation, Guangdong University of Technology, Guangzhou, 510006, China\\
$^2$RIKEN Center for Advanced Intelligence Project (AIP), Tokyo, 103-0027, Japan}

%
%


\maketitle


\begin{abstract}
Tensor ring (TR) decomposition has been successfully used to obtain the state-of-the-art performance in the visual data completion problem.
However, the existing TR-based completion methods are severely non-convex and computationally demanding. In addition, the determination of the optimal TR rank is a tough work in practice.
To overcome these drawbacks, we first introduce a class of new tensor nuclear norms by using \emph{tensor circular unfolding}.
Then we theoretically establish connection between the rank of the circularly-unfolded matrices and the TR ranks. We also develop an efficient tensor completion algorithm by minimizing the proposed tensor nuclear norm.
Extensive experimental results demonstrate that our proposed tensor completion method outperforms the conventional tensor completion methods in the image/video in-painting problem with striped missing values.
 
\end{abstract}

\begin{keywords}
Tensor completion, tensor ring decomposition, nuclear norm, image in-painting.
\end{keywords}

\section{Introduction}
Low rank tensor completion (LRTC) problem aims to recover the incomplete tensor from the observed entries by assuming different low-rank tensor structures, and it has attracted a lot of attentions in the past decades \cite{liu2013tensor, liu2009tensor, wang2017efficient, yuan2018higher, yuan2018tensor}. 
Most recently, Zhao \textit{et al.} proposed tensor ring decomposition~\cite{zhao2016tensorring}, which achieves the state-of-the-art performance in the LRTC problem~\cite{wang2017efficient,yuan2018tensor,yuan2018higher}.

However, the drawbacks limit the application of the existing TR-based methods in practice.
One of the drawbacks is that the existing TR-based methods are quite time-consuming. For example, TR-ALS \cite{wang2017efficient} has the computational complexity of $\mathcal{O}(PNR^{4}I^{N}+NR^{6})$ and the one of tensor ring low-rank factors (TRLRF)\cite{yuan2018tensor} equals $\mathcal{O}(NR^{2}I^{N}+NR^{6})$, where $N$ denotes the order of the tensor, $I$ denotes the dimension of each mode, $R$ represents the rank of the model and $P$ is a constant between 0 and 1. It can be seen that the computational cost of the two methods increase with the sixth power of TR rank $R$. It implies that a large TR-rank chosen in practice leads to terribly low efficiency, and sometimes with the memory-exploration problem. Though tensor ring weighted optimization (TR-WOPT) \cite{yuan2018higher} applies gradient descent algorithm to find the latent core tensors, the convergence rate of this method is low. Beside the computational complexity problem, the determination of the optimal TR rank is also a tough work in the completion problem. It is because that TR rank is defined as a vector, whose dimension equals to the order of the tensor. This fact makes that the computational complexity for rank selection exponentially increases with the dimension of the rank by using cross validation. Furthermore, TR decomposition is non-convex, so there is no theoretical guarantee to obtain the global minimum solution. 

\begin{figure}
\begin{center}
\includegraphics[width=0.25\textwidth]{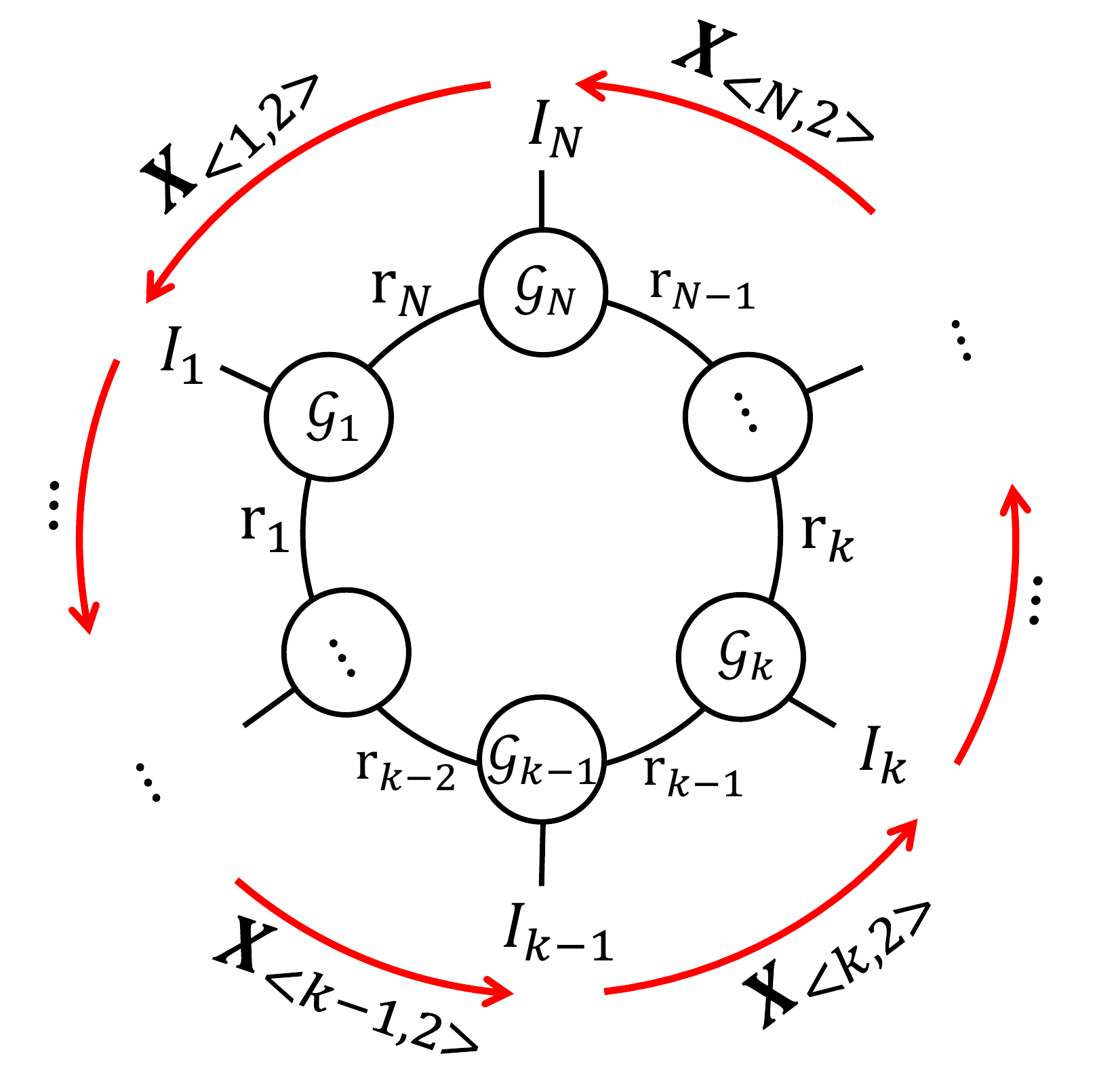}\vspace{-0.2cm}
\caption{Illustration of TR representation of a $N$th-order tensor $\mathcal{X}\in\mathbb{R}^{I_{1}\times\dots\times I_{N}}$ and its {\it tensor circular unfolding}'s. The nodes $\mathcal{G}_{k}\in\mathbb{R}^{r_{k-1}\times I_{k}\times r_{k}}$ for ${k=1\cdots N}$ \cite{zhao2016tensorring} represent tensors whose order is denoted by the number of edges. Each dimension of the tensor is specified by the number beside the edges. The connection line between two nodes denotes the multi-linear product operation of these two nodes along a specific mode. ${\bf X}_{<k, 2>}$ represents the {\it tensor circular unfolding} along modes $k-1$ and $k$, specified by a red arc. Note that {\it tensor circular unfolding}'s are obtained by circularly shifting along the tensor ring. }\label{archi:kdUnfolding}\vspace{-0.8cm}
\end{center}
\end{figure}

To overcome these drawbacks, we develop a novel convex completion method by minimizing tensor ring nuclear norm defined in this paper.
Specifically, we first define a new {\it circular unfolding} operation on higher-order tensor, and theoretically prove that ranks of the circularly-unfolded matrices bound their corresponding TR rank.
After that, the tensor ring nuclear norm is defined as a sum of the matrix nuclear norm of the circularly-unfolded matrices.
As a convex surrogate of TR rank, the proposed completion method not only has lower computational complexity than the conventional TR-based methods, but also avoids choosing the optimal TR rank manually.
To sum up, our contributions of this paper are listed below:
\begin{itemize}\vspace{-0.16cm}
\item We theoretically prove the relationship between the TR-rank and the rank of the circularly-unfolded matrix.\vspace{-0.16cm}
\item To our best knowledge, this is the first paper to introduce tensor ring nuclear norm, and it is demonstrated to obtain the state-of-the-art performance in the image and video completion problem.\vspace{-0.16cm}
\item An alternating direction method of multipliers (ADMM) based algorithm is developed to optimize the proposed model.
\end{itemize}

 
\section{Tensor Ring Nuclear Norm}\label{section2}
To introduce the TR nuclear norm formulation, we first define the {\it tensor circular unfolding} and then theoretically reveal its connection to TR rank.
\begin{definition}\label{def:kd_unfolding}
(Tensor circular unfolding) Let $\mathcal{X}\in\mathbb{R}^{I_{1}\times\dots\times I_{N}}$ be a $N$th-order tensor, its tensor circular unfolding is a matrix, denoted by ${\bf X}_{<k, d>}$ of size $I_{t}I_{t+1}\dots I_{k} \times I_{k+1}\dots I_{t-1},$ whose elements are defined by
\vspace{-0.2cm}\begin{eqnarray}
{\bf X}_{<k, d>}(i_{t}i_{t+1}\dots i_{k},i_{k+1}\dots i_{t-1})=\mathcal{X}(i_{1},i_{2},\dots,i_{N})
\end{eqnarray}
where 
\vspace{-0.2cm}\begin{equation}\label{eq:t}
t = \left\{ \begin{array}{ll}
k-d+1, \ \ \quad\qquad d\le k ;\\
k-d+1+N \qquad  \text{otherwise} .
\end{array} \right.
\end{equation}
$d$ continuous indices (including $k$th index) enumerate the rows of $X_{<k, d>}$, and the rest $N-d$ indices for its columns. $d<N$ is the positive integer and named step-length in our paper.
\end{definition}

\begin{theorem}\label{theo:boundTR}
Assume $\mathcal{X}\in\mathbb{R}^{I_{1}\times\dots I_{N}}$ is $N$th-order tensor with $\left[r_{1}, r_{2}, \dots, r_{N}\right]$ TR-format, then for each unfolding matrix ${\bf X}_{<k, d>}$, 
\vspace{-0.2cm}\begin{eqnarray}
{\it rank}\left({\bf X}_{<k, d>}\right)\le r_{k}r_{t-1},
\end{eqnarray}
where $r_{0}=r_{N}$.
\end{theorem}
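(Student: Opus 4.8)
The plan is to use the trace (``ring'') form of the TR decomposition together with the cyclic invariance of the trace, so that the two index blocks of the circular unfolding ${\bf X}_{<k,d>}$ become two complementary \emph{contiguous arcs} of the ring, each of which contracts into a single matrix living in an $r_{t-1}r_{k}$-dimensional space. First I would write the TR format entrywise as
\[
\mathcal{X}(i_{1},\dots,i_{N}) = \mathrm{Tr}\big(\mathbf{G}_{1}(i_{1})\,\mathbf{G}_{2}(i_{2})\cdots\mathbf{G}_{N}(i_{N})\big),
\]
where $\mathbf{G}_{j}(i_{j})\in\mathbb{R}^{r_{j-1}\times r_{j}}$ denotes the $i_{j}$-th lateral slice of the core $\mathcal{G}_{j}$ and $r_{0}=r_{N}$. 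Since the trace is invariant under cyclic rotation of the product, I may start the chain at the core index $t$ defined in \eqref{eq:t}, obtaining $\mathcal{X}(i_{1},\dots,i_{N}) = \mathrm{Tr}\big(\mathbf{G}_{t}(i_{t})\,\mathbf{G}_{t+1}(i_{t+1})\cdots\mathbf{G}_{t-1}(i_{t-1})\big)$ with all subscripts read modulo $N$; this rotation is exactly what aligns the chain with the row arc $i_{t},\dots,i_{k}$ and the column arc $i_{k+1},\dots,i_{t-1}$ of ${\bf X}_{<k,d>}$.

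Next I would contract each arc into one matrix. Define the subchain products
\[
\mathbf{P}(i_{t},\dots,i_{k}) = \mathbf{G}_{t}(i_{t})\cdots\mathbf{G}_{k}(i_{k})\in\mathbb{R}^{r_{t-1}\times r_{k}},\qquad
\mathbf{Q}(i_{k+1},\dots,i_{t-1}) = \mathbf{G}_{k+1}(i_{k+1})\cdots\mathbf{G}_{t-1}(i_{t-1})\in\mathbb{R}^{r_{k}\times r_{t-1}},
\]
where the intermediate rank legs telescope because consecutive cores share a rank index, so the only surviving outer legs are $r_{t-1}$ and $r_{k}$. Then for every choice of indices
\[
{\bf X}_{<k,d>}(i_{t}\cdots i_{k},\, i_{k+1}\cdots i_{t-1}) = \mathrm{Tr}\big(\mathbf{P}(i_{t},\dots,i_{k})\,\mathbf{Q}(i_{k+1},\dots,i_{t-1})\big).
\]
Using $\mathrm{Tr}(\mathbf{P}\mathbf{Q}) = \sum_{a=1}^{r_{t-1}}\sum_{b=1}^{r_{k}}\mathbf{P}_{a,b}\mathbf{Q}_{b,a} = \langle \mathrm{vec}(\mathbf{P}),\, \mathrm{vec}(\mathbf{Q}^{\top})\rangle$, this entry is an inner product of two vectors in $\mathbb{R}^{r_{t-1}r_{k}}$ that depend only on the row multi-index and on the column multi-index, respectively.

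Finally I would read off the factorization. Stacking $\mathrm{vec}(\mathbf{P}(i_{t},\dots,i_{k}))^{\top}$ as the rows of a matrix $\mathbf{U}\in\mathbb{R}^{(I_{t}\cdots I_{k})\times(r_{t-1}r_{k})}$ and $\mathrm{vec}(\mathbf{Q}(i_{k+1},\dots,i_{t-1})^{\top})^{\top}$ as the rows of $\mathbf{V}\in\mathbb{R}^{(I_{k+1}\cdots I_{t-1})\times(r_{t-1}r_{k})}$, the identity above becomes ${\bf X}_{<k,d>} = \mathbf{U}\mathbf{V}^{\top}$. Since $\mathbf{U}$ and $\mathbf{V}$ each have $r_{t-1}r_{k}$ columns, $\mathrm{rank}({\bf X}_{<k,d>}) \le \min\{\mathrm{rank}(\mathbf{U}),\mathrm{rank}(\mathbf{V})\} \le r_{t-1}r_{k}$, which is the claim.

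The only genuinely delicate part is the modular index bookkeeping. One must check that after the cyclic rotation the row arc $t,t+1,\dots,k$ has exactly $d$ entries; that $\mathbf{P}$ and $\mathbf{Q}$ remain contiguous products along the ring also in the wrap-around regime $d>k$ (where $t=k-d+1+N$ and $\mathbf{P}$ runs $\mathbf{G}_{t}\cdots\mathbf{G}_{N}\mathbf{G}_{1}\cdots\mathbf{G}_{k}$); and that the surviving rank legs are indeed $r_{t-1}$ and $r_{k}$ under the convention $r_{0}=r_{N}$. Everything else — cyclicity of the trace, the telescoping of the intermediate rank dimensions, and the inner-product-to-factorization step — is routine.
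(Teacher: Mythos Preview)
Your proposal is correct and follows essentially the same approach as the paper: write the TR entry as a trace, use cyclic invariance to align the product with the row and column arcs of ${\bf X}_{<k,d>}$, contract each arc into a single matrix of size $r_{t-1}\times r_{k}$ (resp.\ $r_{k}\times r_{t-1}$), and read the trace as a bilinear form yielding a rank-$(r_{t-1}r_{k})$ factorization. The paper expresses the final step via an explicit double index sum over $\alpha_{k},\alpha_{t-1}$ rather than via $\mathrm{vec}$ and an inner product, but this is only a notational difference.
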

\begin{proof}
The tensor $\mathcal{X}$ with TR-format is expressed in element-wise form given by 
\begin{eqnarray}\label{eq:trX}
\mathcal{X}(i_{1}, i_{2}, \dots, i_{N}) = \text{Tr}\left\{ {\bf G}_{1}(i_{1}) {\bf G}_{2}(i_{2})\dots  {\bf G}_{N}(i_{N}) \right\}.
\end{eqnarray}
${\bf G}_{k}(i_{k})$ denotes the $i_{k}$th lateral slice matrix of the latent tensor $\mathcal{G}_{k}$, which is of size $r_{k-1}\times r_{k}$. By employing the property of the trace operation, the element of tensor $\mathcal{X}$ can be expressed in {\it tensor circular unfolding} format, i.e., 
\vspace{-0.2cm}\begin{eqnarray}\label{eq:trX2}
&{\bf X}_{<k, d>}(i_{t} i_{t+1} \dots, i_{k}, i_{k+1} \dots i_{t-1} )\nonumber\\
=& \text{Tr}\{ {\bf G}_{t}(i_{t})\dots  {\bf G}_{k}(i_{k}) {\bf G}_{k+1}(i_{k+1})\dots {\bf G}_{t-1}(i_{t-1})\}\nonumber\\
=& \text{Tr}\{{\bf A}(i_{t}i_{t+1}\dots i_{k}) {\bf B}(i_{k+1}\dots i_{t-1})\},
\end{eqnarray}
where $\mathcal{A}\in\mathbb{R}^{r_{t-1}\times I_{t}I_{t+1}\dots I_{k}\times r_{k}}$ with ${\bf A}(I_{t}I_{t+1}\dots I_{k}) = {\bf G}_{t}(i_{t}) {\bf G}_{t+1}(i_{t+1})\dots  {\bf G}_{k}(i_{k})$, $\mathcal{B}\in\mathbb{R}^{r_{k}\times I_{k+1}\dots I_{t-1}\times r_{t-1}}$ with ${\bf B}(I_{k+1}\dots I_{t-1}) =  {\bf G}_{k+1}(i_{k+1})\dots {\bf G}_{t-1}(i_{t-1}).$
\\ We can also rewrite (\ref{eq:trX2}) in the index form, which is 
\begin{eqnarray}\label{eq:trX3}
&{\bf X}_{<k, d>}(i_{t}i_{t+1}\dots, i_{k}, i_{k+1} \dots i_{t-1} )\nonumber\\
=& \sum_{\alpha_{k} = 1}^{r_{k}}\sum_{\alpha_{t-1} = 1}^{r_{t-1}}\mathcal{ A}(\alpha_{t-1}, i_{t}i_{t+1}\dots i_{k}, \alpha_{k}) \nonumber\\ 
&\mathcal{B}(\alpha_{k}, i_{k+1}\dots i_{t-1}, \alpha_{t-1})
\nonumber\\
=& \sum_{\alpha_{k}\alpha_{t-1} = 1}^{r_{k}r_{t-1}} {\bf \hat A}( i_{t}i_{t+1}\dots i_{k}, \alpha_{k}\alpha_{t-1})
\nonumber\\
& {\bf \hat B}( i_{k+1}\dots i_{t-1}, \alpha_{k}\alpha_{t-1})
\end{eqnarray}
Therefore, we can get ${\it rank}\left({\bf X}_{<k, d>}\right)\le r_{k}r_{t-1}.$
\end{proof}

\begin{definition}\label{def:TRNN}
(TR nuclear norm) Assume the tensor $\mathcal{X}$ with TR-form, its TR nuclear norm is defined by
\begin{eqnarray}
Q_{tr} = \sum_{k=1}^{N}\alpha_{k} \|{\bf X}_{<k, d>}\|_{*},
\end{eqnarray}
where $\|.\|_{*}$ is defined as the sum of singular values of a matrix.
\end{definition}

Note that TR nuclear norm is combined by a series of {\it tensor circular folding}'s $\left\{ {\bf X}_{<k, d>} \right\}_{k=1}^{N}$ of a tensor. In the case of  $d=2$, $\left\{{\bf X}_{<k, 2>}\right\}_{k=1}^{N}$ is obtained by circularly shifting along the tensor ring, shown in Fig. \ref{archi:kdUnfolding}.

\section{Tensor Ring Nuclear Norm Minimization}\label{section3}
By utilizing the relationship between the TR-rank and the rank of the circularly-unfolded matrices, a novel convex model named TR nuclear norm minimization (TRNNM) is proposed for LRTC problem, i.e.,
\vspace{-0.2cm}\begin{eqnarray}\label{model:TRNN}
\min_{\mathcal{X}}:&\sum_{k=1}^{N}\alpha_{k} \|{\bf X}_{<k, d>}\|_{*}\nonumber\\
s.t.:&\mathcal{X}_{\Omega} = \mathcal{T}_{\Omega}
\end{eqnarray}
where $\Omega$ denotes the index set of observed entries of $\mathcal{T}.$

The problem (\ref{model:TRNN}) is difficult to solve due to $\{{\bf X}_{<k, d>}\}_{k=1}^{N}$ share the same entries and  can't be optimized independently. To simplify the optimization, we introduce additional tensors $\{\mathcal{M}^{(k)}\}_{k=1}^{N}$ and thus obtain the equivalent formulation:
\vspace{-0.2cm}\begin{eqnarray}\label{model:TRNN-1}
\min_{\mathcal{X}, \mathcal{M}^{(k)}}:&\sum_{k=1}^{N}\alpha_{k} \|{\bf M}_{<k, d>}^{(k)}\|_{*}\nonumber\\
s.t.:&\mathcal{X}=\mathcal{M}^{(k)}, k=1,\dots,N\nonumber\\
& \mathcal{X}_{\Omega} = \mathcal{T}_{\Omega}
\end{eqnarray}

\renewcommand{\algorithmicensure}{\textbf{Parameters:}}
\begin{algorithm}[htb]
\caption{The TRNNM algorithm.}
\label{alg:TRNNM}
\begin{algorithmic}[1]
\REQUIRE
       { Missing entry zero filled tensor data ${\mathcal T}\in\mathbb{R}^{I_{1}\times\dots\times I_{N}}$ with observed index set $\Omega$, 
\ENSURE
       $\{\alpha_{k}\}_{k=1}^{N}$, $\rho=1e-5$, $tol=1e-5$ .}

 \STATE{ Initialize: zero filled $\mathcal{X}$, $\mathcal{M}^{(k)}$ with $\mathcal{M}^{(k)}_{\Omega} = \mathcal{X}_{\Omega} =\mathcal{T}_{\Omega}$}, $\mathcal{Y}^{(k)}=0$.

 \FOR{$t=1 $ \TO$t_{max}$}
 \STATE{$\hat{\mathcal{X}}=\mathcal{X}$}
 \FOR{$k=1$ \TO $N$}
 \STATE{$\mathcal{M}^{(k)}\gets(\ref{eq:m})$}
 \ENDFOR
 \STATE{$\mathcal{X}\gets(\ref{eq:x})$}
 \FOR{$k=1$ \TO $N$}
 \STATE{$\mathcal{Y}^{(k)}\gets(\ref{eq:y})$}
 \ENDFOR
 \IF{$\|\mathcal{X}-\hat{\mathcal{X}}\|_{F}/\|\hat{\mathcal{X}}\|_{F}\le tol$}
 \STATE{break}
 \ENDIF
 \ENDFOR
\end{algorithmic}
\end{algorithm}

ADMM is developed to solve problem (\ref{model:TRNN-1}) due to its efficient in solving optimization problem with multiple nonsmooth terms in the objective function \cite{lin2010augmented}. We define the  augmented Lagrangian function as follows:
\begin{equation}\label{model:TRNN-lagr}
\begin{split}
&L(\mathcal{X},\mathcal{M}^{(1)},\dots, \mathcal{M}^{(N)}, \mathcal{Y}^{(1)},\dots,\mathcal{Y}^{(N)})\\
&=\sum_{k}^{N}\alpha_{k}\|{\bf M}_{<k, d>}^{(k)}\|_{*} + <\mathcal{X}-\mathcal{M}^{(k)},\mathcal{Y}^{(k)}>\\
&+\frac{\rho}{2}\|\mathcal{M}^{(k)}-\mathcal{X}\|_{F}^{2}\\
& s.t. \quad\mathcal{X}_{\Omega} = \mathcal{T}_{\Omega}
\end{split}
\end{equation}

According to the framework of ADMM, we can update $\mathcal{M}^{(k)}$'s, $\mathcal{X}$ and $\mathcal{Y}^{(k)}$'s as follows.\\
\\ {\bf{Update}} $\mathcal{M}^{(k)}$. It is easy to note that problem (\ref{model:TRNN-lagr}) can be converted to an equivalent formulation:
\begin{equation}\label{model:TRNN-lagr-mx}
\begin{split}
&L(\mathcal{X},\mathcal{M}^{(1)},\dots, \mathcal{M}^{(N)}, \mathcal{Y}^{(1)},\dots,\mathcal{Y}^{(N)})\\
&=\sum_{k}^{N}\alpha_{k}\|{\bf M}_{<k, d>}^{(k)}\|_{*} + \frac{\rho}{2}\|\mathcal{M}^{(k)}-\mathcal{X}-\frac{1}{\rho}\mathcal{Y}^{(k)}\|_{F}^{2} \\
&- \frac{1}{2\rho}\|\mathcal{Y}^{(k)}\|_{F}^{2}\\
\end{split}
\end{equation}
To optimize $\mathcal{M}^{(k)}$ is equivalent to solve the subproblem:
\begin{eqnarray}\label{model:TRNN-m}
\min_{\mathcal{M}^{(k)}}:  \sum_{k}^{N}\alpha_{k}\|{\bf M}_{<k, d>}^{(k)}\|_{*} + \frac{\rho}{2}\|\mathcal{M}^{(k)}-\mathcal{X}-\frac{1}{\rho}\mathcal{Y}^{(k)}\|_{F}^{2}
\end{eqnarray}

The above problem has been proven to lead to a closed form in \cite{ma2011fixed, cai2010singular, recht2010guaranteed}. Thus the optimal $\mathcal{M}^{(k)}$ can be given by:
\begin{eqnarray}\label{eq:m}
\mathcal{M}^{(k)}=\text{fold}_{k}\left [{\bf D}_{\tau}\left({\bf X}_{<k, d>} + \frac{1}{\rho}{\bf Y}_{<k, d>}\right)\right ]
\end{eqnarray}
where  $\tau = \frac{\alpha_{k}}{\rho}$ and ${\bf D}_{\tau}\left(\cdot\right)$ denotes the thresholding SVD  operation \cite{cai2010singular}. If the SVD of ${\bf A}={\bf USV}^{T}$, 
\begin{eqnarray}\label{eq:tsvd}
{\bf D}_{\tau}\left({\bf A}\right) = {\bf U}\max\left\{{\bf S}-\tau {\bf I},0\right\}{\bf V}^{T}
\end{eqnarray}
where ${\bf I}$ is an identity matrix with the same size of ${\bf S}.$
\\{\bf{Update}} $\mathcal{X}$. The optimal $\mathcal{X}$ with all other variables fixed is given by solving the following subproblem of (\ref{model:TRNN-lagr-mx}):
\begin{eqnarray}\label{model:TRNN-lagr-x}
\min_{\mathcal{X}}: & \frac{\rho}{2}\|\mathcal{M}^{(k)}-\mathcal{X}-\frac{1}{\rho}\mathcal{Y}^{(k)}\|_{F}^{2}\nonumber\\
 s.t.:& \quad\mathcal{X}_{\Omega} = \mathcal{T}_{\Omega}
\end{eqnarray}
It is easy to check that the solution of (\ref{model:TRNN-lagr-x}) is given by:
\begin{equation}\label{eq:x}
\mathcal{X}_{i_{1},\dots,i_{N}} = \left\{ \begin{array}{ll}
\left(\frac{1}{N}\sum_{k=1}^{N}\mathcal{Z}^{(k)}\right)_{i_{1},\dots,i_{N}} (i_{1},\dots,i_{N})\notin\Omega ;\\
\mathcal{M}_{i_{1},\dots,i_{N}}  \quad \quad \qquad\qquad (i_{1},\dots,i_{N})\in\Omega .
\end{array} \right.
\end{equation}
where $\mathcal{Z}^{(k)}=\mathcal{M}^{(k)}-\frac{1}{\rho}\mathcal{Y}^{(k)}.$
\\{\bf Update} $\mathcal{Y}^{(k)}$. The Lagrangian multiplier $\mathcal{Y}^{(k)}$ is updated by:
\begin{eqnarray}\label{eq:y}
\mathcal{Y}^{(k)} = \mathcal{Y}^{(k)} + \rho(\mathcal{X}-\mathcal{M}^{(k)})
\end{eqnarray}

The TRNNM algorithm is summarized in Algorithm \ref{alg:TRNNM}.
\subsection{Computational complexity of algorithm}
For a tensor $\mathcal{X}\in\mathbb{R}^{I_{1}\times\dots\times I_{N}}$ with $I_{k}=I, k=1,\dots, N,$ the computational complexity of our proposed method is $\mathcal{O}(NI^{N+d})$ where $d\le N/2$. In contrast to TRALS and TRLRF with computational complexity of $\mathcal{O}(PNR^{4}I^{N}+NR^{6})$ and $\mathcal{O}(NR^{2}I^{N}+NR^{6})$ respectively, the computational complexity of our method is may much smaller when $I\le R$. In practice, as shown in \cite{wang2017efficient, yuan2018tensor}, the suitable TR-rank $R$ is always much higher than the dimension $I$ in the high-order form of visual data, which is also found in our experiment. In addition, due to the immense TR-rank selection possibilities, the computational complexity of TR-ALS and TRLRF exponentially increases by using cross validation.
\vspace{-0.2cm}\begin{figure}[htb]
\centering
\begin{minipage}[t]{0.1\textwidth}
\includegraphics[width=1.0\linewidth, height=1.0\linewidth]{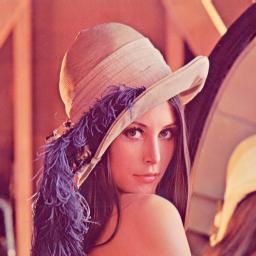}\vspace{-0.3cm}\caption*{Original}
\end{minipage}
\begin{minipage}[t]{0.1\textwidth}
\includegraphics[width=1.0\linewidth, height=1.0\linewidth]{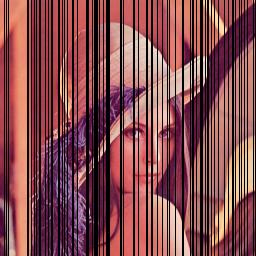}\vspace{-0.3cm}\caption*{$mr=0.3$}
\end{minipage}
\begin{minipage}[t]{0.1\textwidth}
\includegraphics[width=1.0\linewidth, height=1.0\linewidth]{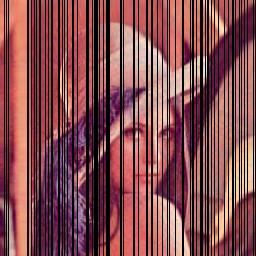}\vspace{-0.3cm}\caption*{FBCP}
\end{minipage}
\begin{minipage}[t]{0.1\textwidth}
\includegraphics[width=1.0\linewidth, height=1.0\linewidth]{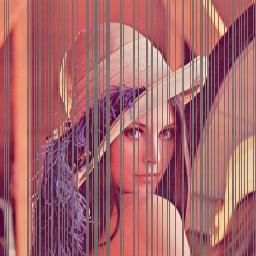}\vspace{-0.3cm}\caption*{HaLRTC}
\end{minipage}\\
\begin{minipage}[t]{0.1\textwidth}
\includegraphics[width=1.0\linewidth, height=1.0\linewidth]{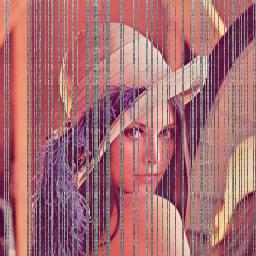}\vspace{-0.3cm}\caption*{\small{SiLRTC-TT}}
\end{minipage}\vspace{-0.2cm}
\begin{minipage}[t]{0.1\textwidth}
\includegraphics[width=1.0\linewidth, height=1.0\linewidth]{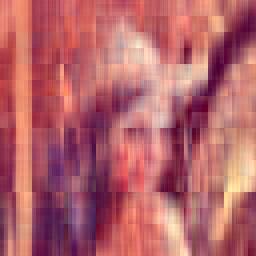}\vspace{-0.3cm}\caption*{TR-ALS}
\end{minipage}
\begin{minipage}[t]{0.1\textwidth}
\includegraphics[width=1.0\linewidth, height=1.0\linewidth]{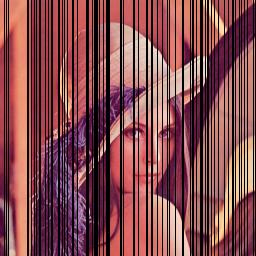}\vspace{-0.3cm}\caption*{tSVD}
\end{minipage}
\begin{minipage}[t]{0.1\textwidth}
\includegraphics[width=1.0\linewidth, height=1.0\linewidth]{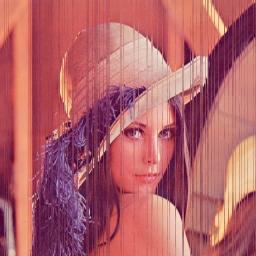}\vspace{-0.3cm}\caption*{\bf TRNNM}
\end{minipage}
\vspace{-0.1cm}
\caption{Lena image with 30\% of stripes missing and its restorations by completion methods.}\label{recovery:lena}\vspace{-0.2cm}
\end{figure}

\vspace{-0.2cm}\begin{figure*}[htb]
\begin{center}
\begin{minipage}[t]{0.08\textwidth}\centering
\includegraphics[width=1.0\linewidth, height=1.0\linewidth]{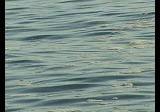}
\end{minipage}\vspace{-0.1cm}
\begin{minipage}[t]{0.08\textwidth}\centering
\includegraphics[width=1.0\linewidth, height=1.0\linewidth]{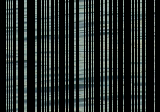}
\end{minipage}
\begin{minipage}[t]{0.08\textwidth}\centering
\includegraphics[width=1.0\linewidth, height=1.0\linewidth]{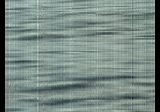}
\end{minipage}
\begin{minipage}[t]{0.08\textwidth}\centering
\includegraphics[width=1.0\linewidth, height=1.0\linewidth]{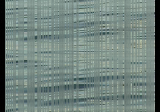}
\end{minipage}
\begin{minipage}[t]{0.08\textwidth}\centering
\includegraphics[width=1.0\linewidth, height=1.0\linewidth]{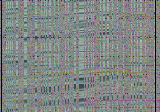}
\end{minipage}
\begin{minipage}[t]{0.08\textwidth}\centering
\includegraphics[width=1.0\linewidth, height=1.0\linewidth]{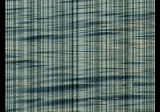}
\end{minipage}
\begin{minipage}[t]{0.08\textwidth}\centering
\includegraphics[width=1.0\linewidth, height=1.0\linewidth]{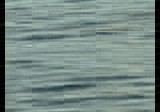}
\end{minipage}
\begin{minipage}[t]{0.08\textwidth}\centering
\includegraphics[width=1.0\linewidth, height=1.0\linewidth]{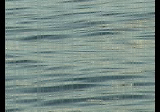}\vspace{2mm}
\end{minipage}\\
\begin{minipage}[t]{0.08\textwidth}\centering
\includegraphics[width=1.0\linewidth, height=1.0\linewidth]{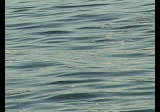}\vspace{-0.2cm}
\caption*{Original}
\end{minipage}
\begin{minipage}[t]{0.08\textwidth}\centering
\includegraphics[width=1.0\linewidth, height=1.0\linewidth]{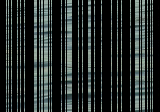}\vspace{-0.2cm}
\caption*{$mr=0.7$}
\end{minipage}
\begin{minipage}[t]{0.08\textwidth}\centering
\includegraphics[width=1.0\linewidth, height=1.0\linewidth]{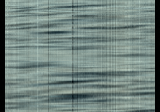}\vspace{-0.2cm}
\caption*{FBCP}
\end{minipage}
\begin{minipage}[t]{0.08\textwidth}\centering
\includegraphics[width=1.0\linewidth, height=1.0\linewidth]{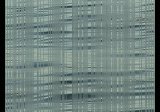}\vspace{-0.2cm}
\caption*{HaLRTC}
\end{minipage}
\begin{minipage}[t]{0.08\textwidth}\centering
\includegraphics[width=1.0\linewidth, height=1.0\linewidth]{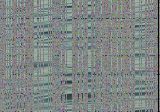}\vspace{-0.2cm}
\caption*{\footnotesize{SiLRTC-TT}}
\end{minipage}
\begin{minipage}[t]{0.08\textwidth}\centering
\includegraphics[width=1.0\linewidth, height=1.0\linewidth]{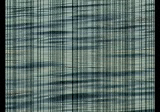}\vspace{-0.2cm}
\caption*{tSVD}
\end{minipage}
\begin{minipage}[t]{0.08\textwidth}\centering
\includegraphics[width=1.0\linewidth, height=1.0\linewidth]{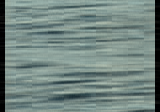}\vspace{-0.2cm}
\caption*{TR-ALS}
\end{minipage}
\begin{minipage}[t]{0.08\textwidth}\centering
\includegraphics[width=1.0\linewidth, height=1.0\linewidth]{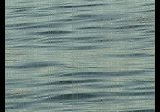}\vspace{-0.2cm}
\caption*{\bf TRNNM}
\end{minipage}\vspace{-0.7cm}
\end{center}
\caption{The 10th and 20th frames (from top to bottom) of the ocean video with 70\% of stripes missing and its restorations by tensor completion methods.}\label{recovery:ocean}\vspace{-0.6cm}
\end{figure*}

\vspace{-0.4cm}\begin{figure}[htb]
\centering
\subfloat[]{\includegraphics[width=0.23\textwidth]
{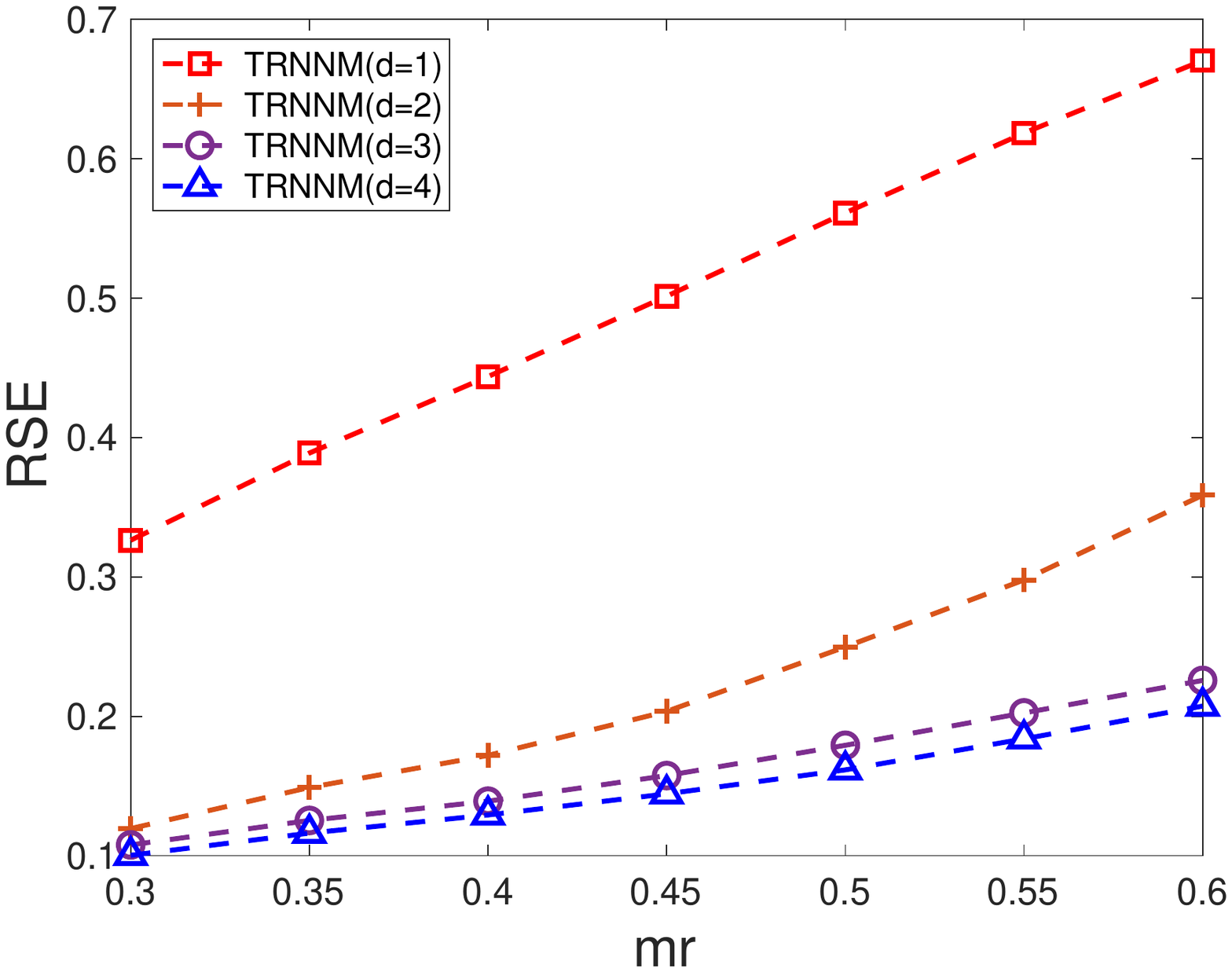}}
\subfloat[]{\includegraphics[width=0.23\textwidth]
{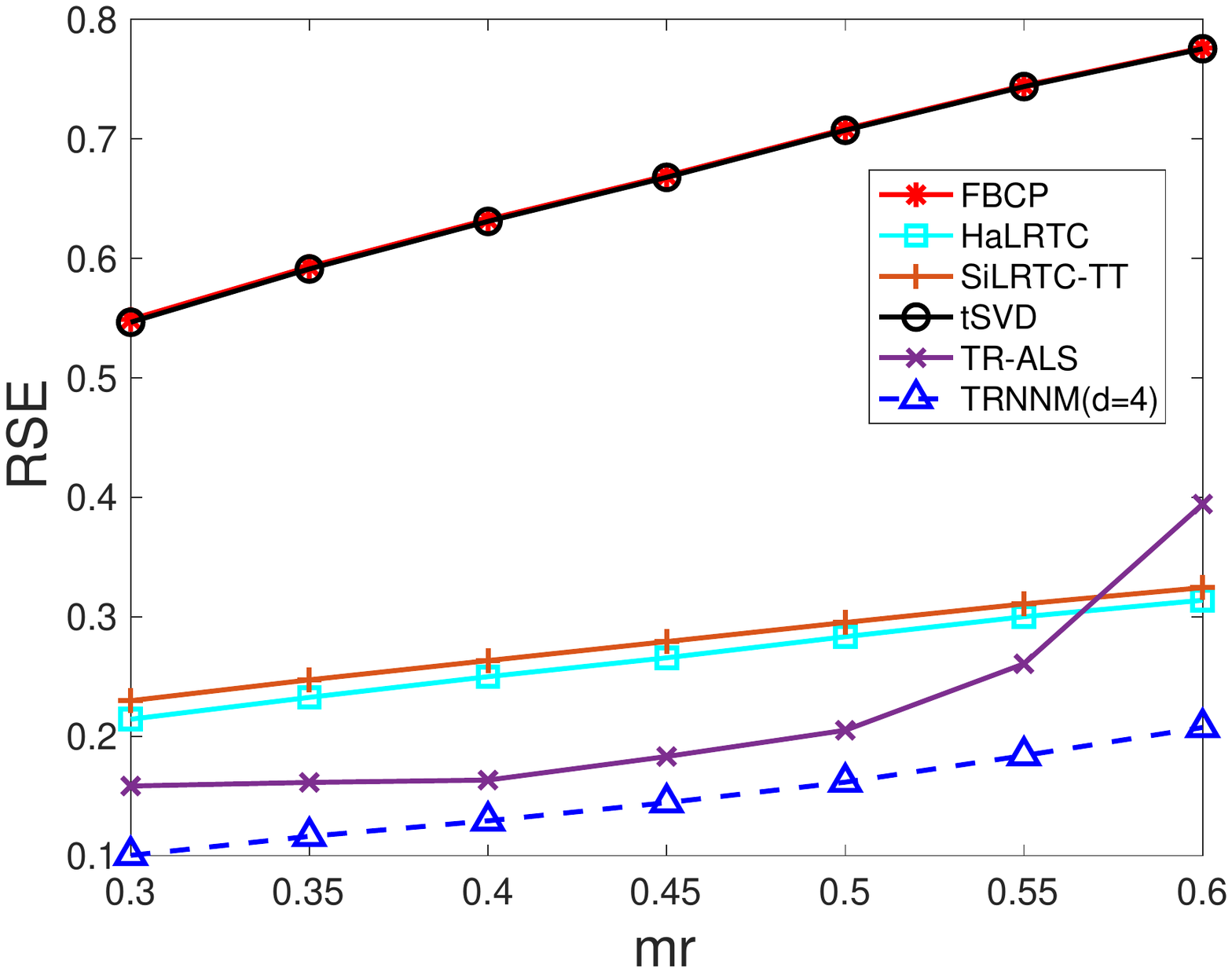}}\vspace{-0.3cm}
\caption{RSE under varying ratio of stripes missing from Lena image. (a) is the investigation of TRNNM with various $d$, and (b) is the comparison of TRNNM with other methods. }
\label{plot:Perf_lena}\vspace{-0.3cm}
\end{figure}

\begin{table}
\centering 
\caption{RSE and runtime (seconds) for 80\%, 70\% and 60\% of stripes randomly missing from the ocean video. } \vspace{-0.2cm}\label{tab:Perf_ocean}
\renewcommand{\arraystretch}{1.3}
\doublerulesep=5pt
\scalebox{0.72}{
\begin{tabular}{c|cc|cc|cc}

 ~     & \multicolumn{2}{c|}{$mr = 0.8$}  & \multicolumn{2}{c|}{$mr = 0.7$}  &  \multicolumn{2}{c}{$mr = 0.6$}   \\  
 \hline
   ~             & RSE &  RunTime  &  RSE &  RunTime  &  RSE &  RunTime \\
   \hline
 FBCP   &   0.207  &   298.41 & 0.107  &    302.40  &  0.099  &  227.91      \\
   
 HaLRTC    & 0.207  &  41.65    &  0.125  &   41.43    & 0.093  &    32.77     \\
    
 SiLRTC-TT   &  0.303 &   81.47   &  0.287  &   43.48    & 0.268 &   40.63    \\

 tSVD     & 0.461    &    506.71    &  0.286  &  498.24     & 0.126 &  453.92   \\

 TR-ALS  & 0.116   &   568.42     & 0.113   &  851.37     &  0.103  &   1.54e3   \\

 {\bf TRNNM}  & {\bf 0.098}   &  235.52    & {\bf 0.066}    &  233.71     & {\bf 0.048}  &   215.01    \\
\end{tabular}
}\vspace{-0.6cm}
\end{table}
\section{Experiments}\label{section5}
To validate our proposed method TRNNM, both image and video completion are used to compare our method and state-of-the-art methods, i.e., FBCP \cite{zhao2015bayesian}, HaLRTC \cite{liu2013tensor}, SiLRTC-TT \cite{bengua2017efficient}, tSVD \cite{zhang2015exact}, TR-ALS \cite{wang2017efficient}. We conduct each experiment  10 times and  record the average relative square error (RSE) and its runtime, where
$
RSE = \|\mathcal{X}-\mathcal{T}\|_{F}/ \|\mathcal{T}\|_{F}.
$
\subsection{Image Completion}
In this section, Lena image is used to evaluate the performance of TRNNM and its compared methods. The image is initially presented by 3rd-order tensor with size of $256\times256\times 3$. We directly reshape the image into 9th-order tensor with size of $4\times\dots\times4\times3$ for the methods assuming the data with TT/TR structure, i.e. SiLRTC-TT, TR-ALS and TRNNM, due to the high-order reshape operation is often used to improve the performance in classification \cite{novikov2015tensorizing} and completions \cite{bengua2017efficient, Yuan2017highorder, wang2017efficient}. We conduct experiments under varying ratio of stripes missing from the image. In the case of $N$th-order tensor, the TRNNM with $d=s$ is equivalent to that with $d=N-s$, we thus only consider the {\it step-length} $d$ varying from 1 to $\left\lfloor\frac{N}{2}\right\rfloor$ where $\left\lfloor.\right\rfloor$ denotes the $floor$ operation. 

Fig. \ref{plot:Perf_lena}(a) shows that our proposed method obtains better results as {\it step-length} increases from 1 to 4, which indicates that closer to $\left\lfloor\frac{N}{2}\right\rfloor$ the {\it step-length} $d$,  stronger the ability of TRNNM to capture information. Seen from Fig. \ref{plot:Perf_lena}(b), TRNNM with $d=4$ significantly outperforms other methods in RSE under various missing ratios. The image restorations are shown in Fig. \ref{recovery:lena}, which demonstrates that TRNNM with $d=4$ performs best on estimating the stripes missing values.

Due to the superiority of TRNNM when {\it step-length} $d=\left\lfloor\frac{N}{2}\right\rfloor$, we set $d=\left\lfloor\frac{N}{2}\right\rfloor$ in default in our later experiments.

\subsection{Video Completion}
The ocean video \cite{liu2009tensor} with size $112\times 160\times 3\times 32$ is used in this experiment. As done before, the ocean video is reshaped into 7th-order tensor of size $16\times7\times16\times10\times3\times8\times4$ for TT/TR-based methods. We conduct experiment under varying ratio of stripes randomly missing from the video.

Table \ref{tab:Perf_ocean} shows that TRNNM significantly performs better than other methods in RSE under our considered missing ratios, i.e. $mr=0.6, 0.7, 0.8$, with an acceptable time-cost. TR-ALS obtains rather well performance following TRNNM, however with significantly time-cost, which is not applicable in practice. For the case of $mr=0.7$, the restorations of some frames are shown in Fig. \ref{recovery:ocean}. Observe that TRNNM obtains the detail information of the frames with a better resolution, which demonstrates the superiority of TRNNM on capturing the information of the video with stripes missing.

\section{Conclusions}\label{section6}
We propose a convex completion method by minimizing tensor ring nuclear norm which is first introduced in our paper. The proposed method not only has lower computational complexity than the previous TR-based methods, but also avoids choosing the optimal TR rank. Extensive experimental results demonstrate that the proposed method outperforms the conventional tensor completion methods in image/video completion problem.

\newpage

\bibliographystyle{IEEEbib}
\bibliography{TRNN}

\begin{thebibliography}{10}

\bibitem{liu2013tensor}
Ji~Liu, Przemyslaw Musialski, Peter Wonka, and Jieping Ye,
\newblock ``Tensor completion for estimating missing values in visual data,''
\newblock {\em IEEE transactions on pattern analysis and machine intelligence},
  vol. 35, no. 1, pp. 208--220, 2013.

\bibitem{liu2009tensor}
Ji~Liu, P.~Musialski, P.~Wonka, and Jieping Ye,
\newblock ``Tensor completion for estimating missing values in visual data,''
\newblock in {\em 2009 IEEE 12th International Conference on Computer Vision},
  Sept 2009, pp. 2114--2121.

\bibitem{wang2017efficient}
Wenqi Wang, Vaneet Aggarwal, and Shuchin Aeron,
\newblock ``Efficient low rank tensor ring completion,''
\newblock in {\em 2017 IEEE International Conference on Computer Vision
  (ICCV)}, Oct 2017, pp. 5698--5706.

\bibitem{yuan2018higher}
Longhao Yuan, Jianting Cao, Qiang Wu, and Qibin Zhao,
\newblock ``Higher-dimension tensor completion via low-rank tensor ring
  decomposition,''
\newblock {\em arXiv preprint arXiv:1807.01589}, 2018.

\bibitem{yuan2018tensor}
Longhao Yuan, Chao Li, Danilo Mandic, Jianting Cao, and Qibin Zhao,
\newblock ``Tensor ring decomposition with rank minimization on latent space:
  An efficient approach for tensor completion,''
\newblock {\em arXiv preprint arXiv:1809.02288}, 2018.

\bibitem{zhao2016tensorring}
Qibin Zhao, Guoxu Zhou, Shengli Xie, Liqing Zhang, and Andrzej Cichocki,
\newblock ``Tensor ring decomposition,''
\newblock {\em CoRR}, vol. abs/1606.05535, 2016.

\bibitem{lin2010augmented}
Zhouchen Lin, Minming Chen, and Yi~Ma,
\newblock ``The augmented lagrange multiplier method for exact recovery of
  corrupted low-rank matrices,''
\newblock {\em arXiv preprint arXiv:1009.5055}, 2010.

\bibitem{ma2011fixed}
Shiqian Ma, Donald Goldfarb, and Lifeng Chen,
\newblock ``Fixed point and bregman iterative methods for matrix rank
  minimization,''
\newblock {\em Mathematical Programming}, vol. 128, no. 1-2, pp. 321--353,
  2011.

\bibitem{cai2010singular}
Jian-Feng Cai, Emmanuel~J Cand{\`e}s, and Zuowei Shen,
\newblock ``A singular value thresholding algorithm for matrix completion,''
\newblock {\em SIAM Journal on Optimization}, vol. 20, no. 4, pp. 1956--1982,
  2010.

\bibitem{recht2010guaranteed}
Benjamin Recht, Maryam Fazel, and Pablo~A Parrilo,
\newblock ``Guaranteed minimum-rank solutions of linear matrix equations via
  nuclear norm minimization,''
\newblock {\em SIAM review}, vol. 52, no. 3, pp. 471--501, 2010.

\bibitem{zhao2015bayesian}
Qibin Zhao, Liqing Zhang, and Andrzej Cichocki,
\newblock ``Bayesian cp factorization of incomplete tensors with automatic rank
  determination,''
\newblock {\em IEEE transactions on pattern analysis and machine intelligence},
  vol. 37, no. 9, pp. 1751--1763, 2015.

\bibitem{bengua2017efficient}
Johann~A Bengua, Ho~N Phien, Hoang~Duong Tuan, and Minh~N Do,
\newblock ``Efficient tensor completion for color image and video recovery:
  Low-rank tensor train,''
\newblock {\em IEEE Transactions on Image Processing}, vol. 26, no. 5, pp.
  2466--2479, 2017.

\bibitem{zhang2015exact}
Zemin Zhang and Shuchin Aeron,
\newblock ``Exact tensor completion using t-{SVD},''
\newblock {\em arXiv preprint arXiv:1502.04689}, 2015.

\bibitem{novikov2015tensorizing}
Alexander Novikov, Dmitrii Podoprikhin, Anton Osokin, and Dmitry~P Vetrov,
\newblock ``Tensorizing neural networks,''
\newblock in {\em Advances in Neural Information Processing Systems}, 2015, pp.
  442--450.

\bibitem{Yuan2017highorder}
Longhao Yuan, Qibin Zhao, and Jianting Cao,
\newblock ``High-order tensor completion for data recovery via sparse
  tensor-train optimization,''
\newblock {\em CoRR}, vol. abs/1711.02271, 2017.

\end{thebibliography}

\end{document}